\documentclass{article}
\usepackage{iclr2026_conference,times}

\usepackage{amsmath}
\usepackage{amssymb}
\usepackage{amsthm}
\newtheorem{theorem}{Theorem}

\usepackage{graphicx}
\graphicspath{{}}
\usepackage{booktabs}
\usepackage{hyperref}
\usepackage{url}
\usepackage{tikz}
\usetikzlibrary{arrows.meta,positioning,calc}

\usepackage{amsmath,amsfonts,bm}

\def\eqref#1{equation~\ref{#1}}

\def\1{\bm{1}}

\DeclareMathAlphabet{\mathsfit}{\encodingdefault}{\sfdefault}{m}{sl}
\SetMathAlphabet{\mathsfit}{bold}{\encodingdefault}{\sfdefault}{bx}{n}

\title{Structured Abductive-Deductive-Inductive\\Reasoning for LLMs via Algebraic Invariants}

\author{
Sankalp Gilda\thanks{Corresponding author.} \\
DeepThought Solutions \\
\texttt{sankalp.gilda@gmail.com} \\
\And
Shlok Gilda \\
Department of Computer Science \\
University of Florida \\
\texttt{shlok.gilda@ufl.edu}
}

\iclrfinalcopy 

\begin{document}
\raggedbottom

\maketitle

\begin{abstract}
Large language models exhibit systematic limitations in structured logical reasoning: they conflate hypothesis generation with verification, cannot distinguish conjecture from validated knowledge, and allow weak reasoning steps to propagate unchecked through inference chains. We present a symbolic reasoning scaffold that operationalizes Peirce's tripartite inference---abduction, deduction, and induction---as an explicit protocol for LLM-assisted reasoning. The framework enforces logical consistency through five algebraic invariants (the \emph{Gamma Quintet}), the strongest of which---the Weakest Link bound---ensures that no conclusion in a reasoning chain can exceed the reliability of its least-supported premise. This principle, independently grounded as weakest link resolution in possibilistic logic and empirically validated for chain-of-thought reasoning, prevents logical inconsistencies from accumulating across multi-step inference. We verify all invariants through a property-based testing suite of 100 properties and 16 fuzz tests over $10^5$+ generated cases, providing a verified reference implementation of the invariants suitable as a foundation for future reasoning benchmarks.
\end{abstract}

\section{Introduction}\label{sec:introduction}

Large language models have achieved strong performance on reasoning tasks, yet they exhibit systematic failures in structured logical reasoning. On combinatorial logic puzzles, accuracy degrades sharply as problem complexity increases---a phenomenon termed the ``curse of complexity''~\citep{zebralogic2025}---revealing that LLMs struggle to maintain logical consistency across multi-step inference. More fundamentally, chain-of-thought explanations are only 25--39\% faithful to the model's actual computation~\citep{anthropic2025faithfulness}, meaning the reasoning traces that users rely upon to assess answer quality frequently do not reflect the process that produced the answer.

These failures stem from a structural conflation of three distinct inference modes that have been recognized since Peirce's foundational work on the logic of science~\citep{peirce1878deduction}:

\begin{itemize}
\item \textbf{Abduction} (hypothesis generation): generating candidate explanations for observed phenomena.
\item \textbf{Deduction} (logical verification): deriving necessary consequences from premises.
\item \textbf{Induction} (empirical validation): testing predictions against observations.
\end{itemize}

\noindent In a single autoregressive pass, LLMs perform all three simultaneously: generating hypotheses, checking them against implicit constraints, and marshaling evidence---without marking which mode is active at any step. Chain-of-thought prompting~\citep{wei2022chainofthought} approximates deduction but provides no formal guarantees. Self-consistency voting~\citep{wang2023selfconsistency} approximates induction but averages over candidates rather than validating them. Process reward models~\citep{lightman2024verify} score individual steps but do not enforce structural constraints across reasoning chains. None of these approaches explicitly separate inference modes, formally bound how weak premises propagate, or maintain a persistent, auditable knowledge state across interactions.

We present a symbolic reasoning scaffold with three contributions mapped to the workshop's topics of interest:

\begin{enumerate}
\item \textbf{An explicit ADI protocol} separating abduction, deduction, and induction into distinct, auditable phases with epistemic state tracking (Topic~1: deduction, induction, and abduction).

\item \textbf{Five algebraic invariants} (the Gamma Quintet) that formally constrain how reliability propagates through reasoning chains, with the Weakest Link bound preventing logical inconsistencies from accumulating (Topics~2 and~3: symbolic reasoning and avoiding contradictions).

\item \textbf{A property-based verification benchmark} of 100 test properties and 16 fuzz tests validating that implementations preserve these invariants across $10^5$+ randomly generated cases (Topic~5: benchmarks and evaluation).
\end{enumerate}

\noindent The framework operates as an external symbolic system alongside the LLM (Topic~4: external logical solvers), maintaining a knowledge graph with formal consistency guarantees while the LLM handles natural language reasoning.

The Weakest Link bound---ensuring that no aggregated conclusion exceeds the reliability of its least-supported input---has independent theoretical grounding in possibilistic logic~\citep{dubois2025possibilistic}, where it is known as ``weakest link resolution,'' and recent empirical validation demonstrating that chain-of-thought reliability is bounded by its weakest step~\citep{jacovi2024weakestlink}. This convergence from algebraic specification, possibility theory, and empirical measurement provides triangulated justification for the framework's central consistency constraint.

\section{Symbolic Knowledge Representation}\label{sec:fgr}

We represent knowledge claims as structured symbolic objects carrying a three-dimensional descriptor: \textbf{Formality~(F)}, measuring rigor of expression; \textbf{Scope~(G)}, bounding the context where the claim applies; and \textbf{Reliability~(R)}, a computed consistency score on $[0, 1]$. These descriptors let the framework track the epistemic status of LLM-generated conclusions.

\begin{table}[ht]
\caption{Formality levels and reliability ceilings.}
\label{tab:formality}
\centering
\small
\begin{tabular}{@{}llr@{}}
\toprule
\textbf{Level} & \textbf{Description} & \textbf{Ceiling} \\
\midrule
F0 & Informal (anecdotal, authority-based) & 70\% \\
F1 & Structured (ADRs, explicit trade-offs) & 85\% \\
F2 & Empirical (benchmarks, load tests) & 95\% \\
F3 & Formal (proofs, model checking, type-checked) & 99\% \\
\bottomrule
\end{tabular}
\end{table}

\noindent F3 admits machine-checkable verification; recent work~\citep{typedcot2026} proposes extending this to LLM reasoning via the Curry-Howard correspondence, treating reasoning traces as type-checkable proof terms. The F3 ceiling is 99\% rather than 100\%, reflecting that formal proofs depend on unverified proof checkers~\citep{pollack1998verify}. The ordering invariant $C_{F_0} < C_{F_1} < C_{F_2} < C_{F_3}$ ensures that higher formality always permits higher reliability. Orthogonal epistemic layers mark symbolic state: L0 (conjecture, 35\%), L1 (substantiated, 75\%), L2 (corroborated, 100\%), corresponding to the abductive, deductive, and inductive phases respectively.

\textbf{Faithfulness ceiling.} Chain-of-thought explanations are only 25--39\% faithful to the model's actual computation~\citep{anthropic2025faithfulness}, suggesting LLM-generated evidence should cap at F1 with faithfulness as a limiting factor: applying the framework's own min-aggregation principle, the effective ceiling is $\min(0.85, 0.39) = 0.39$ for current models.

The complete effective reliability formula combines all constraints via consistency-preserving inference:
\begin{equation}\label{eq:reff}
R_{\text{eff}} = \min\bigl(
  \min_i R_{\text{adj}}(e_i),\;
  \min_j \max\bigl(0,\, R_{\text{eff}}(d_j) - \text{CL}_j\bigr),\;
  C_L,\;
  C_F
\bigr)
\end{equation}
where $R_{\text{adj}}(e_i)$ is the adjusted score for evidence~$i$ (including temporal decay), $\text{CL}_j$ is the congruence penalty for dependency~$j$, and $C_L$, $C_F$ are the layer and formality ceilings respectively. The $\max(0, \cdot)$ floor ensures all terms stay within $[0, 1]$ even when a congruence penalty exceeds the incoming reliability. The nested $\min$ structure ensures that no individual component can inflate the aggregate---a property we formalize as the WLNK invariant in Section~\ref{sec:invariants}.

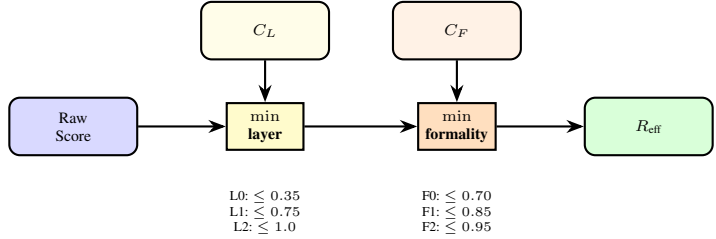
\begin{figure}[ht]
\centering
\begin{tikzpicture}[scale=0.85, transform shape,
    >=Stealth,
    block/.style={draw=black,thick,rectangle,rounded corners,minimum width=2.0cm,minimum height=0.9cm,align=center,font=\scriptsize},
    gate/.style={draw=black,thick,rectangle,minimum width=1.2cm,minimum height=0.7cm,align=center,font=\scriptsize\bfseries},
    arrow/.style={->,>=Stealth,thick}
]
\node[block,fill=blue!15] (raw) at (0,0) {Raw\\Score};
\node[gate,fill=yellow!25] (lg) at (3,0) {$\min$\\layer};
\node[gate,fill=orange!25] (fg) at (6,0) {$\min$\\formality};
\node[block,fill=green!15] (out) at (9,0) {$R_{\text{eff}}$};

\node[block,fill=yellow!10] (lc) at (3,1.5) {$C_L$};
\node[block,fill=orange!10] (fc) at (6,1.5) {$C_F$};

\draw[arrow] (raw) -- (lg);
\draw[arrow] (lg) -- (fg);
\draw[arrow] (fg) -- (out);
\draw[arrow] (lc) -- (lg);
\draw[arrow] (fc) -- (fg);

\node[below=0.5cm of lg,font=\tiny,align=center] {L0: $\leq 0.35$\\L1: $\leq 0.75$\\L2: $\leq 1.0$};
\node[below=0.5cm of fg,font=\tiny,align=center] {F0: $\leq 0.70$\\F1: $\leq 0.85$\\F2: $\leq 0.95$};
\end{tikzpicture}
\caption{Dual ceiling constraint. The propagated reliability score passes through two successive $\min$ gates: the epistemic layer ceiling, then the formality ceiling. An L0 claim at F3 is capped at $\min(0.35, 1.0) = 0.35$. Because the two ceilings are independent, the tighter always dominates: L0 claims are layer-bounded regardless of formality, and fully corroborated L2 claims ($C_L = 1.0$) are formality-bounded regardless of layer. Both dimensions become active only at intermediate layers.}
\label{fig:dualceiling}
\end{figure}

Evidence transfers across contexts incur congruence penalties: CL3 (same context, no penalty), CL2 (similar, $-0.1$), CL1 (different, $-0.4$). Evidence with scope match ``none'' is excluded entirely.

\textbf{Verification credibility.} Evidence scores are adjusted by a verification method multiplier: self-reported ($\times 0.60$), script-attached ($\times 0.85$), externally reviewed ($\times 0.95$), and executed-and-verified ($\times 1.00$). This prevents self-reported evidence from achieving the same influence as independently verified results.

\section{The ADI Reasoning Protocol}\label{sec:adi}

Current LLM reasoning techniques treat inference as a monolithic process:
a single autoregressive pass produces hypotheses, justifications, and
conclusions without distinguishing the epistemic character of each step.
We propose the \emph{Abduction--Deduction--Induction} (ADI) protocol, which
decomposes LLM reasoning into three explicitly labeled inference modes,
each with distinct epistemic commitments and verification requirements.

\subsection{Peircean Inference and LLM Reasoning}\label{sec:peirce}

Charles Sanders Peirce classified inference into three irreducible
modes~\citep{peirce1878deduction,peirce1903harvard}: \emph{abduction}
(inference to the best explanation), \emph{deduction} (necessary inference
from premises), and \emph{induction} (inference from observed instances).
Peirce argued that these modes are distinct epistemic operations, each with its own
warrant structure and failure modes~\citep{magnani2009abductive}.

Contemporary LLM reasoning conflates all three in ways that obscure the
epistemic status of generated claims:
\begin{itemize}
    \item \textbf{Chain-of-thought prompting}~\citep{wei2022chainofthought}
        approximates deductive reasoning by eliciting step-by-step
        derivations, but provides no formal guarantee that steps follow
        from premises. The model may produce a sequence that \emph{reads}
        deductively while smuggling in abductive leaps---plausible
        hypotheses presented as logical necessities.
    \item \textbf{Self-consistency voting}~\citep{wang2023selfconsistency}
        approximates induction by sampling multiple reasoning paths and
        selecting the majority answer. However, it \emph{averages} over
        paths rather than \emph{validating} any single path against
        empirical evidence. Agreement among hallucinated chains does not
        constitute inductive support.
    \item \textbf{Neither technique explicitly marks which inference mode
        is active at each step.} A chain-of-thought trace may begin with
        an abductive hypothesis (``this likely fails because\ldots''),
        shift to deductive reasoning (``given X, it follows that\ldots''),
        and conclude with an inductive generalization (``in similar cases,
        we observe\ldots'')---all without the model or user being aware
        of the transitions.
\end{itemize}

This conflation has a direct analogue in dual-process theory~\citep{kahneman2011thinking}: abduction is fast and pattern-driven (System~1), while deduction and induction are deliberate and rule-governed (System~2). The ADI protocol makes this handoff explicit and auditable.

\subsection{The ADI Protocol}\label{sec:adi-protocol}

The protocol organizes reasoning as a cycle of three phases, each producing
claims at a successively higher epistemic layer. We describe each phase,
its epistemic commitments, and its realization in LLM-assisted reasoning.

\subsubsection{Abduction: Hypothesis Generation (L0---Conjecture)}

Given an anomaly, question, or design problem, the reasoning process begins
by generating candidate hypotheses. These are conjectures: plausible but
unverified explanations inferred from incomplete evidence. In the framework's
epistemic hierarchy, abductive claims are labeled L0 (conjecture) with
reliability capped at 35\%, reflecting their status as unvalidated proposals.

In LLM reasoning, abduction corresponds to the \emph{hypothesis generation
step}. The model draws on patterns in its training distribution to propose
explanations. Multiple candidates are not merely tolerated but encouraged:
the strength of abduction lies in generating a diverse candidate set, not
in premature commitment to a single explanation.

\paragraph{Example.} Consider an LLM-assisted analysis of a retrieval
system. The model proposes: ``Increasing the context window from 4K to 32K
tokens would improve retrieval accuracy, based on patterns showing
correlation between context length and answer quality in multi-document
settings.'' At this stage, the claim is plausible---it aligns with known
properties of attention mechanisms---but it is \emph{unverified}. It
carries an L0 label and cannot propagate as established knowledge.

\paragraph{Epistemic commitment.} Abduction makes no truth claim. It
asserts only that a hypothesis is \emph{worth investigating}. The failure
mode is not generating a wrong hypothesis (that is expected) but treating
an L0 conjecture as if it were established fact---precisely the failure
mode of unconstrained autoregressive generation.

\subsubsection{Deduction: Logical Verification (\texorpdfstring{L0$\to$L1}{L0 to L1}---Substantiation)}

Deductive verification checks whether a hypothesis is logically consistent
with the current body of validated knowledge. The question is not ``is this
true?'' but ``does this contradict anything we already know?'' A hypothesis
that survives deductive scrutiny is promoted to L1 (substantiated): it is
internally consistent but not yet empirically confirmed.

In LLM reasoning, deduction is where contradictions with prior validated
knowledge are detected. This phase serves as a logical filter, catching
hypotheses that are plausible in isolation but incompatible with established
constraints.

\paragraph{Critical structural requirement.} Deductive verification
requires an \emph{external} check. A system cannot reliably verify the
consistency of its own outputs against its own knowledge, because the same
biases that produced the hypothesis contaminate the verification.
Step-by-step verification~\citep{lightman2024verify} partially addresses
this by training separate verifier models, but the principle is more
general: the verifier must have access to a constraint set that is
independent of the generation process.

\paragraph{Example (continued).} The 32K context hypothesis is checked
against the validated knowledge base. Deductive analysis reveals: ``The
hypothesis is logically consistent with known attention mechanism
properties. However, it contradicts a previously validated finding (L2)
that retrieval accuracy plateaus beyond 16K tokens on our specific document
corpus due to the long-tail distribution of relevant passages.'' The
contradiction forces one of three outcomes: (a)~refine the hypothesis to
account for the constraint (e.g., restrict the claim to multi-hop queries),
(b)~challenge the prior L2 finding with new evidence, or (c)~discard the
hypothesis. The deductive phase does not resolve the question---it
sharpens it.

\paragraph{Epistemic commitment.} A claim at L1 is asserting: ``this is
logically consistent with everything we have validated so far.'' It is
\emph{not} asserting truth. The failure mode is passing a hypothesis that
contains a hidden contradiction---which is why the external verification
requirement is structural, not optional.

\subsubsection{Induction: Empirical Validation (\texorpdfstring{L1$\to$L2}{L1 to L2}---Corroboration)}

Inductive validation tests the hypothesis against empirical evidence.
An L1 claim that passes empirical validation is promoted to L2
(corroborated): it is both logically consistent and empirically supported,
within a specified scope.

In LLM reasoning, this is where claims are tested against reality---not
against the model's beliefs about reality. Induction requires running
experiments, collecting data, or testing predictions on held-out
observations.

\paragraph{Example (continued).} The refined hypothesis---``32K context
improves retrieval accuracy specifically for multi-hop queries''---is
tested on a held-out corpus. The benchmark confirms: retrieval F1 improves
from 0.72 to 0.81 for multi-hop queries, with no statistically significant
gain for single-hop queries. The hypothesis is empirically validated with
an explicit scope constraint: the L2 claim specifies the conditions under
which it holds.

\paragraph{Epistemic commitment.} A claim at L2 asserts: ``this has been
observed to hold under specified conditions.'' It does \emph{not} assert
universal truth. The scope constraint is essential: inductive claims
without scope constraints are unfalsifiable and therefore epistemically
vacuous. Every L2 claim in the framework carries an explicit validity
window and scope specification.

\begin{figure}[t]
\centering
\begin{tikzpicture}[
    node distance=1.6cm and 0.6cm,
    phase/.style={
        draw=#1!60!black, thick, rounded corners=5pt,
        minimum width=1.8cm, minimum height=0.8cm,
        font=\small\bfseries, text=#1!80!black,
        fill=#1!15
    },
    layer/.style={
        draw=gray!50, rounded corners=2pt,
        minimum width=1.0cm, minimum height=0.5cm,
        font=\footnotesize\bfseries, fill=#1!20, text=#1!80!black
    },
    drr/.style={
        draw=gray!80!black, thick, rounded corners=3pt,
        minimum width=1.4cm, minimum height=0.6cm,
        font=\small\bfseries, fill=gray!12, text=gray!90!black
    },
    arr/.style={-{Stealth[length=5pt]}, thick, gray!60!black},
    arrlabel/.style={font=\scriptsize, text=gray!80!black, align=center, fill=white,
        inner sep=1.5pt}
]
    \node[phase=blue] (abd) {Abduction};
    \node[phase=green!70!black, right=1.4cm of abd] (ded) {Deduction};
    \node[phase=orange, right=1.4cm of ded] (ind) {Induction};

    \node[layer=blue, below=0.3cm of abd] (L0) {L0};
    \node[layer=green!70!black, below=0.3cm of ded] (L1) {L1};
    \node[layer=orange, below=0.3cm of ind] (L2) {L2};

    \node[font=\scriptsize, text=gray, below=0.08cm of L0] {conjecture};
    \node[font=\scriptsize, text=gray, below=0.08cm of L1] {substantiated};
    \node[font=\scriptsize, text=gray, below=0.08cm of L2] {corroborated};

    \draw[arr] (abd) -- node[arrlabel, above]
        {logical\\[-1pt]consistency} (ded);
    \draw[arr] (ded) -- node[arrlabel, above]
        {empirical\\[-1pt]validation} (ind);

    \node[drr, right=1.2cm of ind] (drr) {DRR};
    \draw[arr] (ind) -- node[arrlabel, above]
        {decision\\[-1pt]final.} (drr);

    \draw[arr, dashed, red!60!black]
        (drr.south) -- ++(0,-1.4)
        -| (abd.south);
    \node[arrlabel, fill=white]
        at ($(abd.south)!0.5!(ded.south) + (0,-1.9)$)
        {anomaly / evidence decay};

\end{tikzpicture}
\caption{The ADI reasoning cycle. Abduction generates conjectures (L0),
    Deduction verifies logical consistency (L1), and Induction validates
    empirically (L2). Finalized decisions become Design Rationale Records
    (DRRs). Evidence decay or new anomalies re-enter the cycle.}
\label{fig:adi-cycle}
\end{figure}

\subsection{Design Rationale Records and the Reasoning Audit Trail}%
\label{sec:drr}

After completing the ADI cycle, a finalized decision is recorded as a
\emph{Design Rationale Record} (DRR): a structured decision record
augmented with the evidence chain that produced it. Each DRR captures:
\begin{enumerate}
    \item The \textbf{inference mode} at each step (abduction, deduction,
        or induction) and the epistemic layer of the resulting claim.
    \item The \textbf{evidence} supporting each transition, with explicit
        provenance---what was checked, against what constraints, with what
        outcome.
    \item The \textbf{reliability score} of the final claim, computed via
        the weakest-link aggregation principle described in
        Section~\ref{sec:wlnk}: the reliability of a conclusion equals
        the reliability of its least reliable supporting premise.
    \item The \textbf{scope specification} bounding the conditions under
        which the claim is valid, and a \textbf{validity window}
        specifying when the evidence must be re-evaluated.
\end{enumerate}

This audit trail directly addresses two concerns identified in the call
for papers. First, it provides a mechanism for \emph{detecting and
eliminating logical contradictions} across multiple reasoning steps
(Topic~3): because every claim is labeled with its inference mode and
epistemic layer, contradictions between claims at different layers are
structurally detectable rather than hidden in natural language traces.
Second, it provides a \emph{benchmark-amenable record} of the reasoning
process (Topic~5): the audit trail can be evaluated for logical
correctness, evidential completeness, and scope consistency independently
of the domain-specific content.

\paragraph{The Transformer Mandate.}
We introduce a structural constraint on the ADI cycle: the entity that
finalizes a decision (produces the DRR) must be external to the generation
loop. An LLM may propose hypotheses (abduction) and gather supporting
evidence (induction), but ratification---the transition from ``candidate
conclusion'' to ``accepted decision''---requires external verification.
This prevents a failure mode where an autonomous system bootstraps
confidence in its own outputs by citing its own prior
generations~\citep{ferrario2026epistemology}. The mandate is architectural,
not a policy preference: it is enforced by the protocol structure, which
requires the ratifying entity to have access to constraints independent of
the generation process.

\section{Algebraic Consistency Invariants}\label{sec:invariants}

\subsection{The Gamma Quintet}\label{sec:quintet}

We specify five algebraic invariants that any consistency-preserving inference operator $\Gamma: \mathcal{P}([0,1]) \to [0,1]$ must satisfy. These invariants constrain how reliability scores compose across multi-step reasoning, regardless of evaluation order, evidence arrangement, or chain length.

\begin{enumerate}
\item \textbf{IDEM} (Idempotence): $\Gamma([x]) = x$. A single premise retains its original reliability.
\item \textbf{COMM} (Commutativity): $\Gamma([a, b]) = \Gamma([b, a])$. The order in which premises are evaluated is logically irrelevant.
\item \textbf{LOC} (Locality): Changing premise~$E$ affects only conclusions whose derivation graph includes~$E$. Isolated subproofs remain stable.
\item \textbf{WLNK} (Weakest Link): $\Gamma(S) \leq \min(S)$. No inference chain can be more reliable than its least reliable premise.
\item \textbf{MONO} (Monotonicity): $a \leq a'$ implies $\Gamma([a, b]) \leq \Gamma([a', b])$. Strengthening a premise cannot weaken a conclusion.
\end{enumerate}

\noindent LOC is a system-level invariant governing how updates propagate through the derivation graph, distinct from the pointwise algebraic properties (IDEM, COMM, WLNK, MONO); we include it in the Quintet because the framework's consistency guarantees depend on it jointly with the operator properties.

\begin{theorem}[Quintet Satisfaction]\label{thm:quintet}
The G\"{o}del t-norm $\Gamma(S) = \min(S)$~\citep{hajek1998fuzzy} satisfies all five invariants.
\end{theorem}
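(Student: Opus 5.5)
The plan is to verify the five invariants one at a time for $\Gamma(S)=\min(S)$, treating IDEM, COMM, WLNK and MONO as direct consequences of the lattice properties of $\min$ on the totally ordered set $[0,1]$, and handling LOC separately as a structural statement about the derivation graph. Throughout, I would read $\Gamma$ as acting on a finite nonempty multiset of scores. Although the signature $\mathcal{P}([0,1])$ suggests sets, the bracket notation $[a,b]$ suggests lists. Working with multisets covers both readings and makes COMM essentially definitional, while also avoiding the degenerate case $\min(\emptyset)$.

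For the pointwise invariants the steps are short.

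\textbf{IDEM:} $\min$ of a singleton $[x]$ is $x$ by definition.

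\textbf{COMM:} $\min(a,b)=\min(b,a)$ because the minimum depends only on the multiset of values, not on their order. More generally, $\min$ is invariant under any permutation, which also gives associativity and hence independence from evaluation order in longer chains.

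\textbf{WLNK:} $\min(S)\le s$ for every $s\in S$. In particular $\Gamma(S)\le\min(S)$ holds with equality, so the bound is tight.

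\textbf{MONO:} If $a\le a'$, then $\min(a,b)\le\min(a',b)$. I would show this by the case split $a'\le b$ versus $a'>b$, or directly: $\min(a,b)$ is a lower bound of both $a'$ and $b$, so it is at most their greatest lower bound $\min(a',b)$. The same argument extends coordinatewise to multisets of arbitrary size.

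I expect LOC to be the main obstacle, because it is not a property of the operator alone. I would formalize it as follows. Let $R_{\text{eff}}$ be computed recursively over a finite acyclic derivation graph by the nested-$\min$ formula \eqref{eq:reff}. The claim is that if premise $E$ is not an ancestor of conclusion $c$, then changing $E$'s score leaves $R_{\text{eff}}(c)$ unchanged. The proof is by induction on the depth of $c$ in a topological order. $R_{\text{eff}}(c)$ is a function only of its own evidence scores, the ceilings $C_L$ and $C_F$, and $R_{\text{eff}}(d_j)$ for its direct dependencies $d_j$. If $E$ is not an ancestor of $c$, then it is not an ancestor of any $d_j$ and is not among $c$'s own evidence. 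By the induction hypothesis each $R_{\text{eff}}(d_j)$ is therefore unchanged, so every argument of the outer $\min$ is unchanged.

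The key point to state explicitly is that $\min$ and the map $x\mapsto\max(0,x-\text{CL}_j)$ are deterministic functions of their explicit arguments, with no global normalization and no shared state. I would contrast this briefly with operators such as normalized averages across the whole graph, which would break locality. Acyclicity is needed so that the induction is well-founded, and I would state it as an assumption on the derivation graph.
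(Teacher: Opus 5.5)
Your treatment of IDEM, COMM, WLNK and MONO matches the paper's. For MONO the paper uses case analysis on $b$, and your greatest-lower-bound argument is the same fact.

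The real difference is LOC. The paper handles it in one line at the operator level: $\min$ depends only on the elements of its argument multiset, so changing a value outside that multiset has no effect. That gives locality for a single application of $\Gamma$. But the invariant as stated (``conclusions whose derivation graph includes $E$'') is a transitive, graph-level claim, and the paper leaves the step from one node to the whole graph implicit.

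Your induction along a topological order of the acyclic derivation graph, using the recursion in equation~\ref{eq:reff}, is exactly that step. The paper's sentence is in effect the single-node fact your inductive step relies on. Your version does three things the paper's does not:
\begin{enumerate}
\item It proves the system-level statement the paper actually makes.
\item It states the hidden hypotheses explicitly: acyclicity, so that $R_{\text{eff}}$ is well defined, and ceilings $C_L$, $C_F$ and penalties $\text{CL}_j$ that do not themselves depend on $E$.
\item It shows that nothing in the argument uses $\min$ specifically. Any aggregator that is a function of its explicit arguments yields LOC, which explains why LOC does not separate the operators in Table~\ref{tab:alternatives}.
\end{enumerate}
The paper's route is shorter; yours actually proves the stated claim.

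One small slip in an aside. Permutation invariance does not by itself give associativity: the $n$-ary mean is permutation invariant, yet binary averaging is not associative. Associativity of $\min$ comes instead from its being the meet in a total order. Associativity is not among the five invariants, so this does not affect your proof.
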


\begin{proof}
IDEM: $\min([x]) = x$. COMM: $\min$ is symmetric over its arguments. LOC: $\min$ depends only on the multiset elements; modifying a value outside the multiset has no effect. WLNK: $\min(S) \leq \min(S)$ trivially. MONO: if $a \leq a'$, then $\min(a,b) \leq \min(a',b)$ by case analysis on $b$.
\end{proof}

\begin{theorem}[Idempotent Uniqueness~\citep{klement2000triangular,metcalfe2005fuzzy}]\label{thm:unique}
Among continuous t-norms on $[0,1]$, the G\"{o}del t-norm is the unique idempotent t-norm. This follows from the Klement--Mesiar--Pap characterization of continuous t-norms~\citep{klement2000triangular}: among continuous t-norms, those satisfying t-norm idempotence ($\Gamma(x,x) = x$ for all $x \in [0,1]$) are exactly those equal to $\min$. When we require operators to satisfy both IDEM (on singletons) and WLNK as an upper bound, restriction to the continuous t-norm family picks out $\min$ uniquely. The Quintet invariants are the contribution; $\min$ is one valid instantiation. A future learned aggregator satisfying the Quintet while permitting confidence accumulation for independent evidence would also be valid.
\end{theorem}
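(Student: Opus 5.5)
The claim is that a t-norm $T$ with $T(x,x)=x$ for all $x\in[0,1]$ must equal $\min$. Two bounds are needed, and continuity turns out not to be needed at all.

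The upper bound $T(x,y)\le\min(x,y)$ holds for every t-norm. It uses only monotonicity and the boundary condition $T(x,1)=x$. Since $y\le 1$, we get $T(x,y)\le T(x,1)=x$. By commutativity, $T(x,y)=T(y,x)\le y$ as well. This is exactly the WLNK inequality, and it holds for every t-norm, not just $\min$. So WLNK by itself does not single out $\min$.

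The lower bound comes from idempotence. Suppose without loss of generality that $x\le y$. Monotonicity in the second argument gives $T(x,y)\ge T(x,x)$, and idempotence gives $T(x,x)=x$. Hence $T(x,y)\ge x=\min(x,y)$. Combined with the upper bound, $T=\min$.

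This argument never uses continuity, so the uniqueness holds among all t-norms. It is therefore stronger than what a route through the Klement--Mesiar--Pap ordinal-sum characterization would give. I would still mention that route as a cross-check. A continuous t-norm is an ordinal sum of Archimedean summands, each isomorphic to the product or the Łukasiewicz t-norm. Each Archimedean summand has idempotents only at its endpoints, so a t-norm idempotent everywhere has no Archimedean summands and reduces to $\min$. I would not depend on this route, because it invokes a heavy classification to prove a two-line fact.

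The main obstacle is reconciling the statement's informal clause about IDEM (on singletons) together with WLNK. IDEM in the Quintet is $\Gamma([x])=x$, which is a condition on singletons. The t-norm idempotence used above is $\Gamma([x,x])=x$. These are different conditions: the product t-norm satisfies IDEM on singletons (trivially, reading $\Gamma([x])$ as $x$), satisfies WLNK, is continuous, and is not $\min$.

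So the proof must make explicit that the uniqueness assertion is the t-norm idempotence $T(x,x)=x$. Concretely, I would state precisely which hypothesis the uniqueness needs, $T(x,x)=x$. I would then note that singleton IDEM plus WLNK does not suffice, with the product t-norm as the counterexample. This also supports the statement's own remark that other Quintet-satisfying aggregators are admissible.
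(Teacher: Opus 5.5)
Your proof is correct, and it takes a different route from the paper. The paper gives no argument beyond the citation. It appeals to the Klement--Mesiar--Pap structure theorem: a continuous t-norm is an ordinal sum of Archimedean summands whose only idempotents are the summand endpoints. Idempotence at every point therefore leaves no room for a summand and forces $\min$.

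You instead give the direct sandwich $x = T(x,x) \le T(x,y) \le T(x,1) = x$ for $x \le y$, using only monotonicity, commutativity and the neutral element. This route buys three things. It is self-contained. It shows continuity is unnecessary, so uniqueness holds among all t-norms. It isolates the hypothesis that actually does the work, namely diagonal idempotence $T(x,x)=x$. The classification route is heavier and only pays off if you want more, such as the idempotent set of an arbitrary continuous t-norm.

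Your caveat about the statement's third sentence is also correct and worth keeping. Every t-norm satisfies WLNK, and under the convention $T([x])=x$ also satisfies singleton IDEM. Those two conditions therefore carry no information inside the t-norm family. The product t-norm is a continuous counterexample to the claim that they pick out $\min$. The paper effectively concedes this itself: it gives product an IDEM checkmark in Table~\ref{tab:alternatives} and notes that $a\cdot b \le \min(a,b)$ on $[0,1]$. The provable content is uniqueness under $T(x,x)=x$, exactly as you frame it.
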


\begin{table}[ht]
\caption{Aggregation operators and Gamma Quintet compliance.}
\label{tab:alternatives}
\centering
\small
\begin{tabular}{@{}lccccl@{}}
\toprule
\textbf{Operator} & \textbf{IDEM} & \textbf{COMM} & \textbf{WLNK} & \textbf{MONO} & \textbf{Use Case} \\
\midrule
$\min$ (G\"{o}del) & \checkmark & \checkmark & \checkmark & \checkmark & Serial chains \\
Product & \checkmark & \checkmark & $\sim$ & \checkmark & Independent evidence \\
Mean & $\times$ & \checkmark & $\times$ & \checkmark & Not recommended \\
$\max$ & \checkmark & \checkmark & $\times$ & \checkmark & Not recommended \\
\bottomrule
\end{tabular}
\end{table}

Product aggregation satisfies WLNK when all scores are $\leq 1$ (since $a \cdot b \leq \min(a,b)$ for $a,b \in [0,1]$), making it a valid relaxation for genuinely independent evidence. Mean violates both IDEM and WLNK, which has direct consequences for logical consistency: three weak premises at $R = 0.4$ would average to $0.4$, masquerading as comparable to a single controlled experiment at $0.4$---a conflation that obscures the distinction between quantity and quality of evidence.

\subsection{WLNK as a Logical Consistency Rule}\label{sec:wlnk}

The WLNK invariant is a \emph{logical consistency constraint}, not a conservative aggregation heuristic. In a deductive argument chain $A \Rightarrow B \Rightarrow C$, the conclusion $C$ cannot be more reliable than the weakest premise in the chain. This is a structural property of valid inference: if any link in a derivation is uncertain, the derived conclusion inherits that uncertainty.

\textbf{Possibilistic logic foundation.} WLNK directly instantiates Dubois and Prade's weakest link principle from possibilistic logic: ``the strength of an inference chain is that of the least certain formula involved''~\citep{dubois1988possibility}. In possibilistic logic, each formula carries a necessity degree, and the resolution rule propagates the minimum degree through inference steps. Our WLNK invariant lifts this principle from propositional possibilistic logic to a general algebraic constraint on any consistency-preserving operator. Recent work~\citep{dubois2025possibilistic} extends these foundations to graded reasoning under uncertainty.

\textbf{As a logical inference rule.} We can state WLNK as an inference rule analogous to modus ponens:
\[
\dfrac{P_1 : r_1 \qquad P_2 : r_2 \qquad \cdots \qquad P_n : r_n \qquad P_1, \ldots, P_n \vdash C}{C : \min(r_1, \ldots, r_n)}
\]
This rule says: if a conclusion $C$ is derived from premises $P_1, \ldots, P_n$ with respective reliability scores $r_1, \ldots, r_n$, then $C$'s reliability is bounded by the minimum. Compare this to standard modus ponens, which propagates truth values; here we propagate \emph{graded} epistemic status through the same logical structure.

\textbf{Empirical validation for LLM reasoning.} Jacovi et al.~\citep{jacovi2024weakestlink} provide direct empirical evidence for WLNK in the context of chain-of-thought reasoning. They demonstrate that the reliability of multi-step LLM reasoning is bounded by the weakest individual step, confirming that WLNK is an empirically observable property of how reasoning chains degrade, not just a theoretical desideratum.

\textbf{Contradiction detection.} Consider an LLM system that answers question $Q_1$ based on premise $P$ with $R(P) = 0.9$, and later answers question $Q_2$ based on premise $P'$ that contradicts $P$, with $R(P') = 0.3$. Without WLNK, a system using arithmetic mean might assign both answers moderate reliability (e.g., $\frac{0.9 + 0.3}{2} = 0.6$), masking the contradiction. Under WLNK, the system tracks that any conclusion depending on both $P$ and $P'$ is capped at $\min(0.9, 0.3) = 0.3$---the reliability of the weakest (and contradicting) premise. The low aggregate signals that the knowledge base is inconsistent and requires resolution.

\textbf{Worked example.} Suppose an LLM constructs a three-step logical argument:
\begin{enumerate}
\item[S1.] ``Python's GIL prevents true parallelism'' ($R = 0.95$, well-established fact)
\item[S2.] ``Therefore, CPU-bound tasks cannot benefit from threading'' ($R = 0.85$, valid deduction from S1)
\item[S3.] ``Therefore, all Python programs should use multiprocessing'' ($R = 0.40$, overgeneralization---ignores I/O-bound workloads, async alternatives)
\end{enumerate}
Under WLNK: $R_{\text{chain}} = \min(0.95, 0.85, 0.40) = 0.40$. The weak final step correctly caps the entire argument. Under arithmetic mean: $R_{\text{chain}} = \frac{0.95 + 0.85 + 0.40}{3} = 0.73$---a score that would classify this argument as moderately reliable, hiding the logical overreach in S3. WLNK surfaces the single weak step rather than diluting it across the chain.

\textbf{Quadruple-triangulated justification.} The choice of $\min$ is supported by four independent lines: (1)~\emph{t-norm theory}---the G\"{o}del t-norm is the unique continuous idempotent t-norm~\citep{klement2000triangular,metcalfe2005fuzzy} (Theorem~\ref{thm:unique}); (2)~\emph{possibility theory}---Dubois and Prade's necessity-based inference propagates the minimum through deductive chains~\citep{dubois1988possibility,dubois2025possibilistic}; (3)~\emph{empirical measurement}---Jacovi et al.~\citep{jacovi2024weakestlink} confirm that chain-of-thought reliability is bounded by the weakest step; (4)~\emph{algebraic specification}---the Gamma Quintet derives WLNK from first principles (Theorem~\ref{thm:quintet}). Four independent lines arriving at the same operator is difficult to dismiss as coincidence.

\textbf{Two-tier evidence aggregation.} Applying flat $\min$ across heterogeneous evidence is epistemically unsound: a must-pass gate (``does the code compile?'') should aggregate differently from corroborating performance metrics. We extend to a two-tier architecture: \emph{Tier~1} classifies evidence by epistemic role (structural gates, performance metrics, quality assurance, etc.) and aggregates within each role using a role-appropriate operator (gates use $\min$; quality reviews use probabilistic sum; performance metrics use a conservative OWA). \emph{Tier~2} applies $\min$ across role-level scores, preserving WLNK: if any gate fails, the overall score is zero. This decomposition is WLNK-preserving because each within-role operator is a t-norm and the cross-role combiner is $\min$.

\section{The Framework as External Reasoning Scaffold}\label{sec:scaffold}

The framework is an external symbolic system alongside the LLM,
not a modification to the model's internal reasoning~\citep{gilda2026epistemic}. The LLM generates
natural-language hypotheses and arguments; the framework maintains a
symbolic knowledge graph that tracks epistemic status, dependency
structure, and formal invariants over those claims. The interface is
simple: the LLM proposes claims, and the framework records their
provenance, assigns epistemic layers, and enforces consistency constraints
across the growing knowledge graph.

This architecture targets four specific limitations of LLM
reasoning:

\paragraph{Conflated inference modes.}
LLMs routinely mix hypothesis generation with verification and evidence
gathering within a single response, producing outputs that appear
rigorous but conflate logically distinct operations. The ADI protocol
(Section~\ref{sec:adi}) forces explicit separation: abduction generates
candidate claims at L0, deduction checks logical consistency before
promoting to L1, and induction requires empirical evidence for promotion
to L2. Each mode has distinct preconditions and produces claims at a
specific epistemic layer, making the inference type of every claim
auditable.

\paragraph{Inconsistent reliability across responses.}
When an LLM generates multiple claims that depend on shared premises, it
assigns no consistent measure of confidence across the dependency graph.
The WLNK invariant (Section~\ref{sec:invariants}) enforces global
consistency: no composite claim can exceed the reliability of its weakest
supporting evidence, and this constraint propagates transitively through
the entire dependency structure.

\paragraph{No self-correction.}
The Transformer Mandate is an architectural constraint preventing
self-promotion loops: the agent that generates a claim cannot also
provide the evidence that promotes it. Layer promotion requires external
verification, enforced at the data model level rather than by prompt
instruction.

\paragraph{Stale knowledge.}
Every piece of evidence in the knowledge graph carries a
\texttt{valid\_until} timestamp. When evidence expires, dependent claims
are automatically flagged for re-validation. Validity periods are
formality-dependent: informal observations (F0) expire faster than
empirical measurements (F2), reflecting the intuition that rigorously
gathered evidence remains relevant longer.

\subsection{Comparison with Existing Approaches}\label{sec:comparison}

Tool-augmented LLMs extend computational \emph{capabilities} but impose no structure on the \emph{reasoning process}. Neuro-symbolic systems such as Logical Neural Networks~\citep{riegel2020lnn} and DeepProbLog~\citep{manhaeve2021deepproblog} provide strong guarantees but require full domain formalization. Our approach occupies a middle ground: lightweight symbolic structure---epistemic layers, dependency tracking, algebraic invariants---over natural-language claims, without requiring translation into formal logic. The framework tracks epistemic status and structural relationships rather than semantic content, so it applies to any domain without requiring domain-specific formalization.

\subsection{Architectural Pattern}\label{sec:roles}

The three components map to the ADI phases: (1)~\textbf{LLM as hypothesis generator} (Abduction)---proposes candidate explanations as L0 conjectures; (2)~\textbf{Framework as consistency checker} (Deduction)---verifies logical consistency against existing knowledge and WLNK before promoting to L1; (3)~\textbf{Empirical tools as validators} (Induction)---test runners, benchmarks, and human review provide evidence for L2 promotion. No single component controls the full epistemic lifecycle of a claim.

\section{Property-Based Verification Benchmark}\label{sec:pbt}

\subsection{Specification-Verification Approach}\label{sec:spec-verify}

Algebraic specification (Section~\ref{sec:invariants}) establishes correctness-by-construction; property-based testing~\citep{claessen2000quickcheck,goldstein2024pbt} establishes correctness-in-implementation, verifying that the actual code preserves invariants despite floating-point arithmetic and concurrent access. Together they constitute a verified reference implementation whose test suite can serve as a starting point for future reasoning benchmarks.

We verify 100 property-based tests across five specification areas, plus 16 fuzz tests, each exercising $10^5$+ randomly generated cases (Table~\ref{tab:pbt}). The largest group (57~tests) targets the $R_{\text{eff}}$ calculator: bounds/WLNK enforcement, ceiling caps, monotonicity, dependency propagation, two-tier evidence aggregation, and preset inheritance. Scope lattice tests (16) verify bounded lattice axioms and parse--serialize round-trips. FSM tests (11) verify phase ordering, reachability, and role enforcement. Graph topology tests (6) verify WLNK propagation through deep chains, diamonds, and mixed topologies. Inspector tests (10) verify BFS traversal correctness. All tests follow the industrial-scale PBT methodology of Arts et al.~\citep{arts2006testing}.

\subsection{PBT as Design Exploration}\label{sec:design-exploration}

Beyond verification, PBT surfaces implicit assumptions. For example, the property ``every phase can reach \textsc{idle}'' initially failed for \textsc{operation} because no back-transition was defined---not a bug, but an undocumented design assumption. The resolution was to classify \textsc{operation} as intentionally terminal and exclude terminal states from the property. This illustrates PBT's role as a consistency auditor: the invariants themselves must be internally consistent before they can meaningfully constrain external claims.

\subsection{Property Inventory}\label{sec:inventory}

\begin{table}[ht]
\caption{Property-based verification inventory for logical consistency
invariants. All 100~properties are randomized checks over $10^5$+
cases; all 16~fuzz tests use corpus-guided mutation~\citep{gofuzz2022}.}
\label{tab:pbt}
\centering
\small
\begin{tabular}{@{}llr@{}}
\toprule
\textbf{Specification Area} & \textbf{Verified Properties} & \textbf{Count} \\
\midrule
$R_{\text{eff}}$ calculator & Bounds, WLNK, ceilings, monotonicity, presets, two-tier & 57 \\
Scope algebra & Lattice axioms, match, round-trip & 16 \\
Epistemic FSM & No skipping, reachability, determinism, ordering & 11 \\
Graph topology & Deep chains, diamonds, mixed serial/parallel & 6 \\
Dependency inspector & BFS correctness, deduplication, layer preservation & 10 \\
Fuzz tests & IEEE 754 boundaries, parser round-trips, concurrency & 16 \\
\midrule
\textbf{Total} & & \textbf{116} \\
\bottomrule
\end{tabular}
\end{table}

\section{Related Work and Limitations}\label{sec:related}

\subsection{Related Work}

\textbf{LLM reasoning approaches.}
Chain-of-thought~\citep{wei2022chainofthought}, self-consistency~\citep{wang2023selfconsistency}, and process reward models~\citep{lightman2024verify} improve output quality but lack formal invariants guaranteeing algebraic properties such as monotonicity or weakest-link propagation. Our framework provides externally verifiable constraints on reasoning chain integrity. Recent benchmarks reveal a ``curse of complexity'' where accuracy degrades sharply with problem scale~\citep{zebralogic2025}; our framework externalizes structural constraints that are enforced symbolically and therefore do not degrade with problem size.

\textbf{Weakest link in reasoning chains.}
\citet{jacovi2024weakestlink} empirically demonstrate that chain-of-thought
reliability is bounded by its weakest step. Our framework formalizes this
as an algebraic invariant ($R_{\text{eff}} \leq \min_i R_i$) enforced by
construction and verified via property-based testing across $10^5$+
generated inputs, transforming an empirical observation into a provable
structural guarantee.

\textbf{Possibilistic logic.}
Possibilistic logic~\citep{dubois1988possibility,dubois2025possibilistic}
provides the theoretical foundation for our weakest-link aggregation.
Possibilistic inference follows the ``weakest link resolution'' rule: the
necessity of a derived conclusion equals the minimum necessity of formulas
in the derivation chain~\citep{dubois2025possibilistic}. Our WLNK bound
is a direct application of this principle, where the ADI cycle produces
claims whose reliability propagates according to possibilistic semantics.

\textbf{Argumentation frameworks.}
Abstract~\citep{dung1995acceptability} and structured argumentation~\citep{besnard2008elements} establish semantics for argument acceptability. Our framework shares the graph-of-claims structure but tracks \emph{reliability}, \emph{temporal validity}, and \emph{scope} rather than resolving attack relations.

\textbf{Neuro-symbolic reasoning.}
Logical Neural Networks~\citep{riegel2020lnn} and DeepProbLog~\citep{manhaeve2021deepproblog} require full domain formalization. Our framework occupies a lighter-weight middle ground: symbolic structure without complete logical formalization, applicable as an external scaffold for general-purpose LLM reasoning.

\subsection{Limitations}

\textbf{No machine-checked proofs.}
Our verification relies on property-based testing, not theorem provers (Coq~\citep{coq_team}) or model checkers (TLA+~\citep{lamport2002tla}). PBT provides high confidence through volume ($10^5$+ cases per invariant) but not exhaustive guarantees.

\textbf{End-to-end LLM evaluation is preliminary.}
An evaluation harness integrating the framework with GPT-4o on ML engineering tasks (AIRS-Bench~\citep{airsbench2026}) has been developed, with preliminary results showing reduced execution errors under framework-guided reasoning. However, evaluation on dedicated logical reasoning benchmarks such as ZebraLogic~\citep{zebralogic2025} and FOLIO~\citep{han2022folio} remains future work. Controlled experiments with the current codebase are ongoing.

\textbf{Ceiling values are policy defaults.}
Specific ceiling percentages (e.g., $C_{F_0} = 70\%$, $C_{L_0} = 35\%$)
are configurable defaults, not empirically calibrated. The implementation supports per-context configuration overrides, allowing domain-specific calibration of ceilings, congruence penalties, and evidence age thresholds. The \emph{ordering
invariant} ($C_{F_0} < C_{F_1} < C_{F_2} < C_{F_3}$) is verified by PBT
regardless of specific values, but empirical calibration against domain-specific outcomes remains future work.

\textbf{ADI requires structured interaction.}
Single-pass inference cannot use the ADI protocol; it requires multi-turn interaction or a multi-agent architecture---a structural overhead that is the cost of formal guarantees.

\textbf{Open questions:} (1)~Can WLNK serve as a differentiable training constraint? (2)~Can the ADI cycle be realized as a multi-agent protocol with specialized agents per inference mode? Preliminary work on programmatic tool calling---where an LLM directly queries the knowledge graph via structured function calls---suggests viability. (3)~Can epistemic and aleatoric uncertainty be decomposed within $R_{\text{eff}}$~\citep{hullermeier2021uncertainty}?

\section{Conclusion}\label{sec:conclusion}

We presented a symbolic reasoning scaffold that operationalizes Peirce's tripartite inference as an explicit ADI protocol for LLM-assisted reasoning, with five algebraic invariants (the Gamma Quintet) formally constraining reliability propagation. The central constraint---$\min$ as the unique idempotent continuous t-norm---converges from four independent lines: algebraic specification, possibilistic logic, empirical CoT measurement, and t-norm theory. A verification suite of 100 property tests and 16 fuzz tests validates implementation fidelity across $10^5$+ cases. We invite the community to extend this work with end-to-end evaluation on logical reasoning benchmarks, differentiable WLNK training objectives, and multi-agent ADI implementations.

\bibliography{references}
\bibliographystyle{iclr2026_conference}

\end{document}